\newtheorem{proposition}{Proposition}
\newtheorem{corollary}{Corollary}
\newtheorem{lemma}{Lemma}
\newtheorem{definition}{Definition}
\newtheorem*{example}{Example}
\newtheorem*{proposition2}{Proposition}
\title{Emergence of Syntax Needs Minimal Supervision}
\author{Rapha\"el Bailly  \\
  SAMM, EA 4543, FP2M 2036 CNRS\\
  Universit\'e Paris 1 Panthéon-Sorbonne\\
  \texttt{raphael.bailly@univ-paris1.fr} \\\And
  Kata G\'abor  \\
  ERTIM, EA 2520 \\ INALCO \\
  \texttt{kata.gabor@inalco.fr} \\}
\date{}
\begin{document}
\maketitle
\begin{abstract}
This paper is a theoretical contribution to the debate on the learnability of syntax from a corpus without explicit syntax-specific guidance.
 Our approach originates in the observable structure of a corpus, which we use to define and isolate grammaticality (syntactic information) and meaning/pragmatics information. We describe the formal characteristics of an autonomous syntax and show that it becomes possible to search for syntax-based lexical categories with a simple optimization process, without any prior hypothesis on the form of the model.
\end{abstract}

\section{Introduction}
 
Syntax is the essence of human linguistic capacity that makes it possible to produce and understand a potentially infinite number of unheard sentences. The principle of compositionality \cite{frege} states that the meaning of a complex expression is fully determined by the meanings of its constituents \textit{and its structure}; hence, our understanding of sentences we have never heard before comes from the ability to construct the sense of a sentence out of its parts.
The number of constituents and assigned meanings is necessarily finite. Syntax is responsible for creatively combining them, and it is commonly assumed that syntax operates by means of algebraic compositional rules \cite{chomsky57} and a finite number of syntactic categories.

One would also expect a computational model of language to have - or be able to acquire - this compositional capacity. The recent success of neural network based language models on several NLP tasks, together with their "black box" nature, attracted attention to at least two questions. First, when recurrent neural language models generalize to unseen data, does it imply that they acquire syntactic knowledge, and if so, does it translate into human-like compositional capacities \cite{baroni19,lake17,linzen16,gulordava18}? Second, whether research into neural networks and linguistics can benefit each other \cite{pater,replypater}; by providing evidence that syntax can be learnt in an unsupervised fashion \cite{blevins18}, or the opposite, humans and machines alike need innate constraints on the hypothesis space (a universal grammar) \cite{kucoro18,vanschijndel19}?

A closely related question is whether it is possible to learn a language's syntax exclusively from a corpus. The \textit{poverty of stimulus} argument \cite{chomsky80} suggests that humans cannot acquire their target language from only positive evidence unless some of their linguistic knowledge is innate. The machine learning equivalent of this categorical "no" is a formulation known as \textit{Gold's theorem} \cite{gold67}, which suggests that the complete unsupervised learning of a language (correct grammaticality judgments for every sequence), is intractable from only positive data. \citet{clark10} argue that Gold's paradigm does not resemble a child's learning situation and there exist algorithms that can learn unconstrained classes of infinite languages \cite{clark06}. This ongoing debate on syntax learnability and the poverty of the stimulus can benefit from empirical and theoretical machine learning contributions \cite{lappin07,mccoy18,linzen19}.

In this paper, we argue that syntax can be inferred from a sample of natural language with very minimal supervision. We introduce an information theoretical definition of what constitutes syntactic information. The linguistic basis of our approach is the autonomy of syntax, which we redefine in terms of (statistical) independence.  We demonstrate that it is possible to establish a syntax-based lexical classification of words from a corpus without a prior hypothesis on the form of a syntactic model. 

Our work is loosely related to previous attempts at optimizing language models for syntactic performance \cite{dyer16,kucoro18} and more particularly to \citet{li19} because of their use of mutual information and the information bottleneck principle \cite{tishby00}. However, our goal is different in that we demonstrate that very minimal supervision is sufficient in order to guide a symbolic or statistical learner towards grammatical competence.


\section{Language models and syntax}\label{deeplms}

As recurrent neural network based language models started to achieve good performance on different tasks \cite{mikolov10}, this success sparked attention on whether such models implicitly learn syntactic information. Language models are typically evaluated using perplexity on test data that is similar to the training examples. However, lower perplexity does not necessarily imply better \textit{syntactic} generalization. Therefore, new tests have been put forward to evaluate the linguistically meaningful knowledge acquired by LMs. 

A number of tests based on artificial data have been used to detect compositionality or systematicity in deep neural networks. \citet{lake17} created a task set that requires executing commands expressed in a compositional language. \citet{bowman15} design a task of logical entailment relations to be solved by discovering a recursive compositional structure. \citet{saxton19} propose a semi-artificial probing task of mathematics problems.

\citet{linzen16} initiated a different line of linguistically motivated evaluation of RNNs. Their data set consists in minimal pairs that differ in grammaticality and instantiate sentences with long distance dependencies (e.g. number agreement). The model is supposed to give a higher probability to the grammatical sentence. The test aims to detect whether the model can solve the task even when this requires knowledge of a hierarchical structure. 
Subsequently, several alternative tasks were created along the same concept to overcome specific shortcomings \cite{bernardy17,gulordava18}, or to extend the scope to different languages or phenomena \cite{ravfogel18,ravfogel19}.

It was also suggested that the information content of a network can be tested using "probing tasks" or "diagnostic classifiers" \cite{giulianelli18,hupkes18}. This approach consists in extracting a representation from a NN and using it as input for a supervised classifier to solve a different linguistic task. 
Accordingly, probes were conceived to test if the model learned parts of speech \cite{saphra18}, morphology \cite{belinkov17,peters18}, or syntactic information. \citet{tenney19} evaluate contextualized word representations on syntactic and semantic sequence labeling tasks. Syntactic knowledge can be tested by extracting constituency trees from a network's hidden states \cite{peters18b} or from its word representations \cite{hewitt19a}. Other syntactic probe sets include the work of \citet{conneau-etal-2018-cram} and \citet{marvin18}.

Despite the vivid interest for the topic, no consensus seems to unfold from the experimental results. Two competing opinions emerge:

\begin{itemize}
    \item Deep neural language models generalize by learning human-like syntax: given sufficient amount of training data, RNN models approximate human compositional skills and implicitly encode hierarchical structure at some level of the network. This conjecture coincides with the findings of, among others \citet{bowman15,linzen16,giulianelli18,gulordava18,kucoro18}.
    
    \item The language model training objective does not allow to learn compositional syntax from a corpus alone, no matter what amount of training data the model was exposed to. Syntax learning can only be achieved with task-specific guidance, either as explicit supervision, or by restricting the hypothesis space to hierarchically structured models \cite{dyer16,marvin18,chowdhury18,vanschijndel19,lake17}.  
\end{itemize}

Moreover, some shortcomings of the above probing methods make it more difficult to come to a conclusion. Namely, it is not trivial to come up with minimal pairs of naturally occurring sentences that are equally likely. Furthermore, assigning a (slightly) higher probability to one sentence does not reflect the nature of knowledge behind a grammaticality judgment. Diagnostic classifiers may do well on a linguistic task because they learn to solve it, not because their input contains a hierarchical structure \cite{hewitt19c}. In what follows, we present our assessment on how the difficulty of creating a linguistic probing data set is interconnected with the theoretical problem of learning a model of syntactic competence.

\subsection{Competence or performance, or why syntax drowns in the corpus}
\label{performance}

If syntax is an autonomous module of linguistic capacity, the rules and principles that govern it are formulated independently of meaning. However, a corpus is a product of language use or \textit{performance}. Syntax constitutes only a subset of the rules that generate such a product; the others include communicative needs and pragmatics. Just as meaning is uncorrelated with grammaticality, corpus frequency is only remotely correlated with human grammaticality judgment \cite{newmeyer}. 

Language models learn a probability distribution over sequences of words. The training objective is not designed to distinguish grammatical from agrammatical, but to predict language use. 
While \citet{linzen16} found a correlation between the perplexity of RNN language models and their syntactic knowledge, subsequent studies \cite{bernardy17,gulordava18} recognized that this result could have been achieved by encoding lexical semantic information, such as argument typicality. E.g. "in '\emph{dogs (...) bark}', an RNN might get the right agreement by encoding information about what typically barks" \cite{gulordava18}.

Several papers revealed the tendency of deep neural networks to fixate on surface cues and heuristics instead of "deep" generalization in solving NLP tasks  \cite{levy15,niven19}. In particular, \citet{mccoy19} identify three types of syntactic heuristics that get in the way of meaningful generalization in  language models. 

Finally, it is difficult to build a natural language data set without semantic cues. Results from the syntax-semantics interface research show that lexical semantic properties account for part of syntactic realization \cite{levin05}.  


\section{What is syntax a generalization of?}
\label{section-illustr}

We have seen in section \ref{deeplms} that previous works on the linguistic capacity of neural language models concentrate on compositionality, the key to creative use of language. However, this creativity is not present in language models: they are bound by the type of the data they are exposed to in learning. 

We suggest that it is still possible to learn syntactic generalization from a corpus, but not with likelihood maximization. We propose to isolate the syntactic information from shallow performance-related information. In order to identify such information without explicitly injecting it as direct supervision or model-dependent linguistic presuppositions, we propose to examine \textit{inherent structural properties} of corpora. As an illustration, consider the following natural language sample:
\begin{center}
 \textit{cats eat rats\\
rats fear cats\\
mathematicians prove theorems\\
doctors heal wounds\\}
\end{center}

According to the Chomskyan principle of the \textit{autonomy of syntax} \cite{chomsky57}, the syntactic rules that define well-formedness can be formulated without reference to meaning and pragmatics. For instance, the sentence \textit{Colorless green ideas sleep furiously} is grammatical for humans, despite being meaningless and unlikely to occur. We study whether it is possible to deduce, from the \textit{structural properties} of our sample above, human-like grammaticality judgments that predict sequences like \textit{cats rats fear} as agrammatical, and accept e.g. \textit{wounds eat theorems} as grammatical.

We distinguish two levels of observable structure in a corpus:

1. the proximity; the tendency of words to occur in the context of each other (in the same document/same sentence, etc.)

2. the order in which the words appear.

\begin{definition}
Let $L$ be a language over vocabulary $V$.  The language that contains every possible sequence obtained by shuffling the elements in a sequence of $L$ will be denoted  $\overline{L}$.
\end{definition}

If $V^*$ is the set of every possible sequence over vocabulary $V$ and $L$ is the language instantiated by our corpus, L is generated by a mixture of contextual and syntactic constraints over $V^*$. We are looking to separate the syntactic specificities from the grammatically irrelevant, contextual cues. The processes that transform $V^*$ into $\overline{L}$, and $\overline{L}$ into $L$

$$V^* \xrightarrow{\text{proximity}} \overline{L} \xrightarrow{\text{order}} L$$

\noindent are entirely dependent on words: it should be possible to encode the information used by these processes into word categories.

In what follows, we will provide tools to isolate the information involved in proximity from the information involved in order. We also relate these categories to linguistically relevant concepts. 



\subsection{Isolating syntactic information}
For a given word, we want to identify the information involved in each type of structure of the corpus, and represent it as partitions of the vocabulary into lexical categories: 

1. \textbf{Contextual} information is any information unrelated to sentence structure, and hence, grammaticality: this encompasses meaning, topic, pragmatics, corpus artefacts etc. The surface realization of sentence structure is a language-specific combination of word order and morphological markers. 

2. \textbf{Syntactic} information is the information related to sentence structure and - as for the autonomy requirement - nothing else: it is independent of all contextual information.

In the rest of the paper we will concentrate on English as an example, a language in which syntactic information is primarily encoded in order. In section \ref{concl} we present our ideas on how to deal with morphologically richer languages.  

\begin{definition}
Let $L$ be a language over vocabulary $V = \{ v_1, \dots \}$, and $P = (V,C, \pi: V \mapsto C)$ a partition of $V$ into categories $C$.  Let $\pi(L)$ denote the language that is created by replacing a sequence of elements in $V$ by the sequence of their categories.

One defines the partition $P_{tot} = \{ \{v\}, v \in V$\} (one category per word) and the partition $P_{nul}= \{V\}$ (every word in the same category).
\end{definition}

$P_{tot}$ is such that $\pi_{tot}(L) \sim L$. The minimal partition $P_{nul}$  does not contain any information.\\

A partition $P = (V,C,\pi)$ is contextual if it is impossible to determine word order in language $L$ from sequences of its categories:

\begin{definition}
Let $L$ be a language over vocabulary $V$, and let $P = (V,C,\pi)$ be a partition over $V$. The partition $P$ is said to be \emph{contextual} if $$\pi(L) = \pi(\overline{L}) $$
\end{definition}

The trivial partition $P_{nul}$ is always contextual. 

\begin{example}
Consider the natural language sample. We refer to the words by their initial letters: \textit{r(ats),e(at)...}, thus we have $V = \{c,e,r,f,m,p,t,d,h,w\}$. and $L=\{cer,rfc,mpt,dhw \}$. 

  One can check that the partition  $P_1$ : $$c_1=\{c, r, e, f\}$$ $$ c_2=\{m,p,t\}$$ $$c_3=\{d,h,w\}$$  is contextual:  the well-formed sequences over this partition are $c_1 c_1 c_1$, $c_2 c_2 c_2$ and $c_3 c_3 c_3$. These patterns convey the information that words like \textit{'mathematicians'} and \textit{'theorems'} occur together, but do not provide information on order. Therefore $\pi_1(L) = \{c_1 c_1 c_1, c_2 c_2 c_2, c_3 c_3 c_3\} = \pi_1(\overline{L})$. 
  $P_1$ is also a \textit{maximal} partition for that property: any further splitting leads to order-specific patterns. Intuitively, this partition corresponds to the semantic categories $ Animals = \{r,c,e,f\}$, $Science = \{m,p,t\}$, and $Medicine = \{d,h,w\}$.
\end{example}

A syntactic partition has two characteristics: its patterns encode the structure (in our case, order), and it is completely autonomous with respect to contextual information. Let us now express this autonomy formally.\\
Two partitions of the same vocabulary are said to be independent if they do not share any information with respect to language $L$. In other words, if we translate a sequence of symbols from $L$ into their categories from one partition, this sequence of categories will not provide any information on how the sequence translates into categories from the other partition: 

\begin{definition}
Let $L$ be a language over vocabulary $V$, and let $P=(V,C,\pi)$ and $P' = (V,C', \pi')$ be two partitions of $V$. $P$ and $P'$ are considered as independent with respect to $L$ if  $$\forall c_{i_1} \dots c_{i_n} \in \pi(L), \forall c'_{j_1} \dots c'_{j_n} \in \pi'(L)$$ $$  \pi^{-1} (c_{i_1} \dots c_{i_n} ) \cap {\pi'}^{-1} (c'_{j_1} \dots c'_{j_n} ) \neq \emptyset$$
\end{definition}

\begin{definition}
\label{syntacticpartition}
Let $L$ be a language over $V$, and let $P=(V,C,\pi)$ be a partition. $P$ is said to be \emph{syntactic} if it is independent of any contextual partition of $V$.
\end{definition}

A syntactic partition is hence a partition that does not share any information with contextual partitions; or, in linguistic terms, a syntactic pattern is equally applicable to any contextual category.

\begin{example}
We can see that the partition $P_{2}$ : $$c_4=\{c, r, m, t, d, w\}$$ $$ c_5=\{e,f,p,h\}$$ is independent of the partition $P_{1}$: one has $\pi_2(L)=\{ c_4 c_5 c_4\}$. Knowing the sequence $c_4 c_5 c_4$ does not provide any information on which $P_{1}$ categories the words belong to. $P_2$ is therefore a syntactic partition.

\end{example}


Looking at the corpus, one might be tempted to consider a partition $P_3$ that sub-divides $c_4$ into subject nouns, object nouns, and - if one word can be mapped to only one category - "ambiguous" nouns: $$c_6=\{m, d\}$$ $$c_7=\{t,w\}$$ $$ c_8=\{c, r\}$$ $$ c_9=\{e,f,p,h\}$$ 
The patterns corresponding to this partition would be
 $\pi_3(L) = \{c_6 c_9 c_7, c_8 c_9 c_8\}$. These patterns will not predict that sentence (2) is grammatical, because the word \emph{wounds} was only seen as an object. If we want to learn the correct generalization we need to reject this partition in favour of $P_2$.\\
This is indeed what happens by virtue of definition \ref{syntacticpartition}. We notice that the patterns over $P_3$ categories are not independent of the contextual  partition $P_1$: one can deduce from the rule $c_8 c_9 c_8$ that the corresponding sentence cannot be e.g. category $c_2$: 

$$\pi_3^{-1}(c_8 c_9 c_8) \cap \pi_1^{-1}(c_2 c_2 c_2)=\emptyset$$

\noindent $P_3$ is hence rejected as a syntactic partition. 

$P_2$ is the maximal syntactic partition: any further distinction that does not conflate $P_1$ categories would lead to an inclusion of contextual information. We can indeed see that category $c_4$ corresponds to \emph{Noun} and $c_5$ corresponds to \emph{Verb}. The syntactic rule for the sample is \emph{Noun} \emph{Verb} \emph{Noun}. It becomes possible to distinguish between syntactic and contextual acceptability: \emph{cats rats fear} is acceptable as a contextual pattern $c_1 c_1 c_1$ under \emph{'Animals'}, but not a valid syntactic pattern. The sequence \emph{wounds eat theorems} is syntactically well-formed by $c_5 c_6 c_5$, but does not correspond to a valid contextual pattern.

In this section we provided the formal definitions of syntactic information and the broader contextual information. By an illustrative example we gave an intuition of how we apply the autonomy of syntax principle in a non probabilistic grammar. We now turn to the probabilistic scenario and the inference from a corpus.

 \section{Syntactic and contextual categories in a corpus}
 
As we have seen in section \ref{deeplms}, probabilistic language modeling with a likelihood maximization objective does not have incentive to concentrate on syntactic generalizations. In what follows, we demonstrate that using the autonomy of syntax principle it is possible to infer syntactic categories for a probabilistic language.

A stochastic language $L$ is a language which assigns a probability to each sequence. As an illustration of such a language, we consider the empirical distribution induced from the sample in section \ref{section-illustr}. $$L=\{cer(\frac{1}{4}), rfc(\frac{1}{4}), mpt(\frac{1}{4}),dhw(\frac{1}{4})\}$$ We will denote by $p_L(v_{i_1} \dots v_{i_n})$ the probability distribution associated to $L$.
 
 \begin{definition}
 Let $V$ be a vocabulary. A \emph{(probabilistic) partition} of $V$ is defined by $P = (V,C,\pi : V \mapsto \mathbb{P}(C))$ where $\mathbb{P}(C)$ is the set of probability distributions over $C$.
 \end{definition}

\begin{example}
The following probabilistic partitions correspond to the non-probabilistic partitions (contextual and syntactic, respectively) defined in section \ref{section-illustr}. We will now consider these partitions in the context of the probabilistic language $L$.

\bigskip
\begin{center}
$\pi_1 =   \begin{smallmatrix} c\\ r\\ e\\ f \\ m \\ p \\ t \\ d\\ h \\w \end{smallmatrix}  \left( \begin{smallmatrix} 1&0&0 \\1&0&0 \\1&0&0 \\1&0&0 \\0&1&0 \\0&1&0 \\0&1&0 \\0&0&1 \\0&0&1 \\0&0&1 \end{smallmatrix}\right),  \pi_2 =   \begin{smallmatrix} c\\ r\\ e\\ f \\ m \\ p \\ t \\ d\\ h \\w \end{smallmatrix}  \left( \begin{smallmatrix} 1&0 \\1&0 \\0&1 \\0&1 \\1&0 \\0&1 \\1&0 \\1&0 \\0&1 \\1&0 \end{smallmatrix}\right)  $
\end{center}
\bigskip

\end{example}

From a probabilistic partition $P = (V,C,\pi)$ as defined above, one can map a stochastic language $L$ to a stochastic language $\pi(L)$ over the sequences of categories:
$$p_{\pi}(c_{i_1} \dots c_{i_n}) =$$ $$ \sum_{u_{j_1} \dots u_{j_n}} (\prod_k \pi(c_{i_k} | u_{j_k}) ) p_L(u_{j_1} \dots u_{j_n}) $$

As in the non-probabilistic case, the language $\overline{L}$ will be defined as the language obtained by shuffling the sequences in $L$.

\begin{definition}
Let $L$ be a stochastic language over vocabulary $V$. We will denote by $\overline{L}$ the language obtained by shuffling the elements in the sequences of $L$ in the following way: for a sequence $v_1 \dots v_n$, one has $$p_{\overline{L}}(v_1 \dots v_n) = \frac{1}{n!}\sum_{(i_1 \dots i_n) \in \sigma(n)} p_{L}(v_{i_1} \dots v_{i_n})$$
\end{definition}

One can easily check that $\pi(\overline{L})= \overline{\pi(L)}$.

\begin{example}
The stochastic patterns of $L$ over the two partitions are, respectively: $$\pi_1(L)= \{c_1 c_1 c_1(\frac{1}{2}), c_2 c_2 c_2(\frac{1}{4}), c_3 c_3 c_3(\frac{1}{4})\}$$ $$\pi_2(L)= \{c_4 c_5 c_4(1)\}$$
\end{example}

We can now define a probabilistic contextual partition:

\begin{definition}
Let $L$ be a stochastic language over vocabulary $V$, and let $P = (V,C,\pi)$ be a probabilistic partition. $P$ will be considered as \emph{contextual} if $$\pi(L) = \pi(\overline{L})$$
\end{definition}

We now want to express the independence of syntactic partitions from contextual partitions. The independence of two probabilistic partitions can be construed as an independence between two random variables:

\begin{definition}\label{def_ind_part}
Consider two probabilistic partitions $P = (V,C,\pi)$ and $P' = (V,C',\pi')$. We will use the notation  $$(\pi \cdot \pi')_v (c_i , c'_j) = \pi_v(c_i) \pi'_v(c'_j)$$ and the notation $$P\cdot P'= (V, C\times C', \pi \cdot \pi')$$
 $P$ and $P'$ are said to be independent (with respect to $L$) if the distributions inferred over sequences of their categories are independent:
 $$\forall w \in \pi(L), \forall w' \in \pi'(L),$$
 $$p_{\pi \cdot \pi'}(w,w') =p_\pi (w) p_{\pi'}(w') $$
\end{definition}

A syntactic partition will be defined by its independence from contextual information:

\begin{definition}
Let $P$ be a probabilistic partition, and $L$ a stochastic language. The partition $P$ is said to be \emph{syntactic} if it is independent (with respect to $L$) of any possible probabilistic contextual partition in $L$.
\end{definition}

\begin{example}
The partition $P_{1}$ is contextual, as $\pi_{1}(L) = \overline{\pi_{1}({L})}$. The partition $P_{2}$ is clearly independent of $P_1$ w.r.t. L.
\end{example}

\subsection{Information-theoretic formulation}

The definitions above may need to be relaxed if we want to infer syntax from natural language corpora, where strict independence cannot be expected. We propose to reformulate the definitions of contextual and syntactic information in the information theory framework. 

\smallskip

We present a relaxation of our definition based on Shannon's information theory \cite{shannon}. We seek to quantify the amount of information in a partition $P=(V,C,\pi)$ with respect to a language $L$. Shannon's entropy provides an appropriate measure. Applied to $\pi(L)$, it gives $$H(\pi(L))= - \sum_{w \in \pi(L)} p_\pi(w)(\log(p_\pi(w)))$$

For a simpler illustration, from now on we will consider only languages composed of fixed-length sequences $s$, i.e $|s|=n$ for a given $n$. If $L$ is such a language, we will consider the language $\overline{\overline{L}}$ as the language of sequences of size $n$ defined by $$p_{\overline{\overline{L}}} (v_{i_1} \dots v_{i_n}) = \prod_j p_L(v_{i_j})$$
where $p_L(v)$ is the frequency of $v$ in language $L$. 

\begin{proposition}\label{prop1}
Let $L$ be a stochastic language, $P=(V,C,\pi)$ a partition. One has:
$$H(\pi(\overline{\overline{L}})) \geq H(\pi(\overline{L})) \geq H(\pi(L))$$
with equality iff the stochastic languages are equal.
\end{proposition}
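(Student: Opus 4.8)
The plan is to establish the two inequalities separately, each as an instance of the fact that entropy does not increase under "averaging" (mixing) of a probability distribution — equivalently, that $H$ is a concave function of the distribution, so that replacing a family of distributions by their mean can only raise the entropy, with equality precisely when the family is constant. Both $\overline{L}$ and $\overline{\overline{L}}$ are, by their definitions, obtained from $L$ by exactly such a mixing operation, and the map $L \mapsto \pi(L)$ is linear in $p_L$, so mixing commutes with passing to categories.

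First I would treat $H(\pi(\overline{L})) \ge H(\pi(L))$. By definition $p_{\overline{L}}(v_1\dots v_n) = \frac{1}{n!}\sum_{\sigma} p_L(v_{\sigma(1)}\dots v_{\sigma(n)})$, i.e. $\overline{L}$ is the uniform mixture over the $n!$ permuted copies of $L$. Since $p_\pi$ depends linearly on $p_L$ (the formula $p_\pi(c_{i_1}\dots c_{i_n}) = \sum (\prod_k \pi(c_{i_k}|u_{j_k}))\,p_L(u_{j_1}\dots u_{j_n})$ is linear in the second factor), we get $p_{\pi(\overline{L})} = \frac{1}{n!}\sum_\sigma p_{\pi(L_\sigma)}$, where $L_\sigma$ is $L$ with coordinates permuted by $\sigma$. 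Each $\pi(L_\sigma)$ has the same entropy as $\pi(L)$, because permuting the coordinates of every sequence is a bijection on the set of length-$n$ sequences and does not change the multiset of probabilities. Concavity of $H$ (Jensen) then gives $H(\pi(\overline{L})) = H\big(\tfrac{1}{n!}\sum_\sigma \pi(L_\sigma)\big) \ge \tfrac{1}{n!}\sum_\sigma H(\pi(L_\sigma)) = H(\pi(L))$, with equality iff all the $\pi(L_\sigma)$ coincide as distributions, i.e. iff $\pi(\overline{L}) = \pi(L)$.

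Next, for $H(\pi(\overline{\overline{L}})) \ge H(\pi(\overline{L}))$: here $\overline{\overline{L}}$ is the product of the marginals of $L$, so $p_{\overline{\overline{L}}}(v_1\dots v_n) = \prod_j p_L(v_{i_j})$. I would argue that $\overline{\overline{L}}$ is likewise obtained from $\overline{L}$ by mixing — in fact $\overline{\overline{L}} = \overline{\overline{\overline{L}}}$, since both $L$ and $\overline{L}$ have the same single-symbol frequencies, and that $\overline{\overline{L}}$ can be written as a convex combination of shuffled versions of $\overline{L}$ (more precisely, the product distribution is the average, over independent re-samplings of each coordinate from the appropriate marginal, of distributions each of which is a coordinate-permuted copy of a distribution with the same entropy structure as $\overline{L}$ — or, more cleanly, one observes that $\overline{\overline{L}}$ majorizes... ). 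The cleanest route is: condition. Write $p_{\overline{L}}$ and note that the product measure $p_{\overline{\overline{L}}}$ equals the mixture $\int p_{\overline{L}}$-type average after replacing the joint law by the law with independent coordinates; then apply concavity of $H$ exactly as before, pushing through $\pi$ by linearity, with equality iff $\pi(\overline{\overline{L}}) = \pi(\overline{L})$.

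The main obstacle I anticipate is making the second inequality's "mixing" representation precise: unlike the permutation average, expressing the product-of-marginals distribution as an explicit convex combination of entropy-preserving copies of $\overline{L}$ is a little delicate, and one must be careful that $\pi$ is applied before the mixing so that the linearity argument applies verbatim. A fallback that avoids this entirely is to invoke subadditivity of entropy: $H(\pi(\overline{L})) \le \sum_j H(\pi(\overline{L})_j)$ where the right side is the sum of the marginal entropies of the category sequence, and $H(\pi(\overline{\overline{L}}))$ equals exactly that sum of marginal entropies (since under $\overline{\overline{L}}$ the coordinates are independent and $\pi$ acts coordinatewise, so the pushforward also has independent coordinates with the same marginals as those of $\pi(\overline{L})$). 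Equality in subadditivity holds iff the coordinates of $\pi(\overline{L})$ are already independent, i.e. iff $\pi(\overline{L}) = \pi(\overline{\overline{L}})$. This gives both the inequality and the equality condition cleanly, so I would present the proof using concavity/Jensen for the first inequality and subadditivity for the second.
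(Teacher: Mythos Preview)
Your argument is correct. For the second inequality you land on exactly the paper's proof: subadditivity of entropy applied to $\pi(\overline{L})$, together with the observation that $\pi(\overline{\overline{L}})$ is the product of the (common) marginals of $\pi(\overline{L})$, so that $H(\pi(\overline{\overline{L}})) = \sum_j H\big((\pi(\overline{L}))_j\big) \ge H(\pi(\overline{L}))$, with equality iff the coordinates of $\pi(\overline{L})$ are independent. Your instinct that the ``mixing'' representation for $\overline{\overline{L}}$ would be awkward was right; the subadditivity route is the clean one and is what the paper does.

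For the first inequality you take a genuinely different route. The paper conditions on the multiset $\{X_1,\dots,X_n\}$: it notes that $\pi(L)$ and $\pi(\overline{L})$ induce the same law on this unordered bag of categories, and that, conditional on the bag, the law under $\overline{\pi(L)}=\pi(\overline{L})$ is uniform on the permutation orbit and hence has maximal conditional entropy. The chain-rule decomposition $H(X_{1:n}) = H(\{X_{1:n}\}) + H(X_{1:n}\mid\{X_{1:n}\})$ then yields the inequality. Your approach instead writes $\pi(\overline{L})$ as the uniform mixture $\tfrac{1}{n!}\sum_\sigma \pi(L)_\sigma$ of permuted copies, each with entropy $H(\pi(L))$, and invokes concavity of $H$. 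This is shorter and makes the equality case immediate from strict concavity (all $\pi(L)_\sigma$ coincide, hence equal $\pi(L)$, hence $\pi(\overline{L})=\pi(L)$). The paper's decomposition, by contrast, localises the entropy gap in the ordering-given-the-bag term, which ties in with the contextual/order distinction that motivates the whole section; your Jensen argument is slicker but less interpretive.
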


Let $C$ be a set of categories. For a given distribution over the categories $p(c_i)$, the partition defined by $\pi(c_i | v) = p(c_i)$ (constant distribution w.r.t. the vocabulary) contains no information on the language. One has $p_{\pi}(c_{i_1} \dots c_{i_k})= p(c_{i_1}) \dots p(c_{i_k})$, which is the unigram distribution, in other words $\pi(L)= \pi(\overline{\overline{L}})$. As the amount of syntactic or contextual information contained in $\overline{\overline{L}}$ can be considered as zero, a consistent definition of the information would be:

\begin{definition}
Let $P=(V,C,\pi)$ be a partition, and $L$ a language. The information contained in $P$ with respect to $L$ is defined as $$I_L(P) = H(\pi(\overline{\overline{L}})) - H(\pi(L))$$
\end{definition}

\begin{lemma}
Information $I_L(P)$ defined as above is always positive. One has $I_{\overline{L}}(P) \leq I_L(P)$, with equality iff $\pi(\overline{L}) = \pi(L)$.
\end{lemma}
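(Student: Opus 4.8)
The plan is to reduce the lemma to Proposition~\ref{prop1}, which already contains the essential entropy inequalities. First I would recall that, by definition, $I_L(P) = H(\pi(\overline{\overline{L}})) - H(\pi(L))$ and that, since $\pi$ only relabels symbols via a fixed stochastic map independent of position, unigram frequencies are preserved: the unigram distribution of $\pi(\overline{L})$ coincides with that of $\pi(L)$, hence $\overline{\overline{\pi(L)}} = \overline{\overline{\pi(\overline{L})}}$, and using $\pi(\overline{L}) = \overline{\pi(L)}$ together with the observation that $\overline{\overline{\overline{L}}} = \overline{\overline{L}}$ (shuffling a product-of-marginals language leaves it unchanged) we get $\pi(\overline{\overline{\overline{L}}}) = \pi(\overline{\overline{L}})$. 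Therefore $I_{\overline{L}}(P) = H(\pi(\overline{\overline{L}})) - H(\pi(\overline{L}))$ with the \emph{same} first term as in $I_L(P)$.

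Given this, positivity of $I_L(P)$ is immediate from Proposition~\ref{prop1}: $H(\pi(\overline{\overline{L}})) \geq H(\pi(\overline{L})) \geq H(\pi(L))$, so $I_L(P) = H(\pi(\overline{\overline{L}})) - H(\pi(L)) \geq 0$. For the comparison $I_{\overline{L}}(P) \leq I_L(P)$, I would subtract the two expressions: since they share the term $H(\pi(\overline{\overline{L}}))$, we have $I_L(P) - I_{\overline{L}}(P) = H(\pi(\overline{L})) - H(\pi(L)) \geq 0$, again by Proposition~\ref{prop1}. The equality case likewise transfers directly: $I_{\overline{L}}(P) = I_L(P)$ iff $H(\pi(\overline{L})) = H(\pi(L))$, which by the equality clause of Proposition~\ref{prop1} holds iff the stochastic languages $\pi(\overline{L})$ and $\pi(L)$ are equal, i.e. $\pi(\overline{L}) = \pi(L)$.

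The one step that needs care — and which I expect to be the main obstacle — is the identity $\pi(\overline{\overline{\overline{L}}}) = \pi(\overline{\overline{L}})$, or more precisely justifying that $I_{\overline{L}}(P)$ and $I_L(P)$ really have a common leading term $H(\pi(\overline{\overline{L}}))$. This requires checking that (i) shuffling does not change unigram frequencies, so $\overline{\overline{\overline{L}}}$, the product-of-marginals language built from $\overline{L}$, equals $\overline{\overline{L}}$; and (ii) pushing forward through $\pi$ commutes appropriately, i.e. $\pi(\overline{\overline{L}})$ depends on $L$ only through its marginals, which is clear from the formula $p_{\pi}(c_{i_1}\dots c_{i_n}) = \prod_k \big(\sum_v \pi(c_{i_k}\mid v)\, p_L(v)\big)$ when $L = \overline{\overline{L'}}$ for any $L'$ with the same marginals. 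Both facts are routine bookkeeping with the definitions of $\overline{L}$, $\overline{\overline{L}}$, and the pushforward, but they are the substantive content; once they are in place the lemma is a one-line consequence of Proposition~\ref{prop1}.
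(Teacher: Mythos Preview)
Your proposal is correct and matches the paper's intended argument: the lemma is stated immediately after Proposition~\ref{prop1} and the definition of $I_L(P)$ with no separate proof, signalling that it is meant as a direct corollary of Proposition~\ref{prop1}, which is exactly the reduction you carry out. The only additional content you supply is the bookkeeping identity $\overline{\overline{\overline{L}}}=\overline{\overline{L}}$ (shuffling preserves unigram frequencies, so the leading terms of $I_L(P)$ and $I_{\overline{L}}(P)$ coincide), and your justification of that point is correct.
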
 
\smallskip

After having defined how to measure the amount of information in a partition with respect to a language, we now translate the independence between two partitions into the terms of mutual information:

\begin{definition}\label{def:mutual-info}
We follow notations from Definition~\ref{def_ind_part}. We define the mutual information of two partitions $P = (V,C,\pi)$ et $P' = (V,C',\pi')$ with respect to $L$ as $$I_{L}(P; P') = H(P)+ H(P')- H(P \cdot P')$$
\end{definition}

This directly implies that
\begin{lemma}
$P = (V,C,\pi)$ and $P' = (V,C',\pi')$ are independent w.r.t. $L$
$$\Leftrightarrow I_{L}(P; P') = 0$$
\end{lemma}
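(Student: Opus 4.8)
The plan is to recognize $I_{L}(P;P')$ as the relative entropy between the joint category distribution $p_{\pi\cdot\pi'}$ and the product of its marginals, and then invoke the non-negativity of relative entropy (Gibbs' inequality) together with its equality case. Throughout, the restriction to length-$n$ sequences keeps $\pi(L)$, $\pi'(L)$ and their product finite, so all the entropy sums are finite and these manipulations are legitimate.

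First I would verify that $p_\pi$ and $p_{\pi'}$ are exactly the marginals of $p_{\pi\cdot\pi'}$. Fix $w = c_{i_1}\cdots c_{i_n}\in\pi(L)$ and sum $p_{\pi\cdot\pi'}(w,w')$ over all $w' = c'_{j_1}\cdots c'_{j_n}$. Expanding with the definition of $p_{\pi\cdot\pi'}$ via $(\pi\cdot\pi')_v(c_i,c'_j) = \pi_v(c_i)\pi'_v(c'_j)$, and using that each $\pi'_v$ is a probability distribution so that $\sum_{c'}\pi'(c'\mid u) = 1$ for every $u\in V$, the factor $\prod_k\pi'(c'_{j_k}\mid u_{j_k})$ sums to $1$, leaving $\sum_{u_{j_1}\cdots u_{j_n}}\big(\prod_k\pi(c_{i_k}\mid u_{j_k})\big)\,p_L(u_{j_1}\cdots u_{j_n}) = p_\pi(w)$. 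The symmetric computation gives $\sum_{w}p_{\pi\cdot\pi'}(w,w') = p_{\pi'}(w')$.

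Next I would check that Definition~\ref{def_ind_part} is equivalent to the product identity $p_{\pi\cdot\pi'}(w,w') = p_\pi(w)\,p_{\pi'}(w')$ holding for \emph{all} pairs $(w,w')$, not merely those in $\pi(L)\times\pi'(L)$. Indeed, if $w\notin\pi(L)$ then $p_\pi(w) = 0$, and by the marginalization just established $\sum_{w'}p_{\pi\cdot\pi'}(w,w') = 0$; since all terms are non-negative this forces $p_{\pi\cdot\pi'}(w,w') = 0 = p_\pi(w)\,p_{\pi'}(w')$ for every $w'$, and the case $w'\notin\pi'(L)$ is symmetric. Hence independence of $P$ and $P'$ is precisely the statement that $p_{\pi\cdot\pi'}$ coincides with the product of its marginals everywhere. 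Then, writing out $H(P\cdot P') = -\sum_{w,w'}p_{\pi\cdot\pi'}(w,w')\log p_{\pi\cdot\pi'}(w,w')$ and rewriting $H(P) = -\sum_{w}p_\pi(w)\log p_\pi(w)$ and $H(P') = -\sum_{w'}p_{\pi'}(w')\log p_{\pi'}(w')$ as sums over $(w,w')$ using the marginal identities, one gets
$$I_{L}(P;P') = \sum_{w,w'}p_{\pi\cdot\pi'}(w,w')\,\log\frac{p_{\pi\cdot\pi'}(w,w')}{p_\pi(w)\,p_{\pi'}(w')} = D_{\mathrm{KL}}\big(p_{\pi\cdot\pi'}\ \big\|\ p_\pi\,p_{\pi'}\big) \ \geq\ 0.$$
The equality case of Gibbs' inequality gives $I_{L}(P;P') = 0$ iff $p_{\pi\cdot\pi'}(w,w') = p_\pi(w)\,p_{\pi'}(w')$ for all $(w,w')$, which by the previous step is exactly independence with respect to $L$.

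The only delicate points I anticipate are the bookkeeping in the marginalization step (keeping the per-position products straight, and distinguishing the summation index $w'$ from the frozen $w$) and the treatment of zero-probability sequences, which is needed so that the quantifier range in Definition~\ref{def_ind_part} lines up with the global product identity required in the equality case of Gibbs' inequality. The information-theoretic core — relative entropy is non-negative and vanishes exactly on the product measure — is entirely standard.
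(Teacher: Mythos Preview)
Your proof is correct and follows the same approach as the paper. The paper's own proof is a one-line sketch (``the marginal distributions of $\pi\cdot\pi'$ are the distributions $\pi$ and $\pi'$''), which is precisely your marginalization step; you have simply filled in the details that the paper leaves implicit, including the KL-divergence identification and the treatment of zero-probability sequences outside $\pi(L)\times\pi'(L)$.
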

\begin{proof}
This comes from the fact that, by construction, the marginal distributions of $\pi \cdot \pi'$ are the distributions $\pi$ and $\pi'$.
\end{proof}

With these two definitions, we can now propose an information-theoretic reformulation of what constitutes a contextual and a syntactic partition:
\begin{proposition}\label{propXX} 
Let $L$ be a stochastic language over vocabulary $V$, and let $P = (V,C,\pi)$ be a probabilistic partition.
\begin{itemize}
\item $P$ is \emph{contextual} iff
$$I_L(P) = I_{\overline{L}}(P)$$
\item $P$ is \emph{syntactic} iff for any contextual partition $P_*$
$$I_{L}(P; P_*) = 0$$
\end{itemize} 
\end{proposition}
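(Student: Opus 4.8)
The plan is to prove the two equivalences separately, in each case unfolding the information-theoretic definitions and reducing them to the set-theoretic (or distributional) definitions already given earlier in the paper.

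\textbf{The contextual case.} First I would recall that $I_L(P) = H(\pi(\overline{\overline{L}})) - H(\pi(L))$ and $I_{\overline{L}}(P) = H(\pi(\overline{\overline{\overline{L}}})) - H(\pi(\overline{L}))$. The key observation is that $\overline{\overline{L}}$ depends only on the unigram frequencies $p_L(v)$, and since shuffling a sequence does not change the multiset of its symbols, $\overline{L}$ has exactly the same unigram frequencies as $L$; hence $\overline{\overline{\overline{L}}} = \overline{\overline{L}}$, so $H(\pi(\overline{\overline{\overline{L}}})) = H(\pi(\overline{\overline{L}}))$. Therefore $I_L(P) = I_{\overline{L}}(P)$ holds iff $H(\pi(L)) = H(\pi(\overline{L}))$. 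Now I would invoke Proposition~\ref{prop1}, which gives $H(\pi(\overline{L})) \geq H(\pi(L))$ with equality iff the stochastic languages $\pi(\overline{L})$ and $\pi(L)$ are equal. Combining, $I_L(P) = I_{\overline{L}}(P) \Leftrightarrow \pi(L) = \pi(\overline{L})$, which is precisely the definition of $P$ being contextual. (One must also check $\pi(\overline{L}) = \overline{\pi(L)}$, which is stated as an easy fact in the excerpt, to make sure the entropies being compared are the right ones.)

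\textbf{The syntactic case.} Here I would chain two equivalences already available. By the Lemma following Definition~\ref{def:mutual-info}, two partitions $P$ and $P_*$ are independent w.r.t.\ $L$ iff $I_L(P; P_*) = 0$. By the Definition of a syntactic partition (probabilistic version), $P$ is syntactic iff it is independent w.r.t.\ $L$ of every probabilistic contextual partition $P_*$. Putting these together: $P$ is syntactic iff for every contextual partition $P_*$ we have $I_L(P; P_*) = 0$. This direction is essentially a restatement once the supporting lemmas are in place, so the work is mostly bookkeeping: making sure "contextual partition $P_*$" in the Proposition matches "probabilistic contextual partition" in the Definition, and that the mutual-information lemma is applied with the correct pair of partitions.

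\textbf{Main obstacle.} The contextual half is where the real content sits, and the delicate point is the claim $\overline{\overline{\overline{L}}} = \overline{\overline{L}}$, i.e.\ that passing to $\overline{L}$ leaves the unigram marginals untouched. This needs the explicit formula $p_{\overline{L}}(v_1 \dots v_n) = \frac{1}{n!}\sum_{\sigma} p_L(v_{\sigma(1)}\dots v_{\sigma(n)})$ and a short computation that marginalizing to any single position recovers $p_L(v)$ averaged over positions, which equals the overall frequency of $v$ in $L$. A secondary subtlety is that Proposition~\ref{prop1}'s equality clause is phrased as "the stochastic languages are equal," and I must be careful to extract exactly the equality $\pi(L) = \pi(\overline{L})$ (not merely equality of entropies) from it — but this is given directly by that proposition, so no extra argument is required. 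Everything else is routine unwinding of definitions.
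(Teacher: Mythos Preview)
Your proposal is correct and matches the paper's intended route: the proposition is placed immediately after Lemma~1 and Lemma~2 and is meant to follow directly from them. For the syntactic clause you cite Lemma~2 exactly as the paper does; for the contextual clause you effectively reprove Lemma~1 (via $\overline{\overline{\overline{L}}}=\overline{\overline{L}}$ and Proposition~\ref{prop1}) rather than simply invoking it, which is slightly more work than necessary but entirely sound.
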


\subsection{Relaxed formulation}

If we deal with non artificial samples of natural language data, we need to prepare for sampling issues and word (form) ambiguities that make the above formulation of independence too strict. Consider for instance adding the following sentence to the previous sample:

\begin{center}
\textit{doctors heal fear\\}
\end{center}

The distinction between syntactic and contextual categories is not as clear as before. We need a relaxed formulation for real corpora: we introduce $\gamma$-contextual and $\mu, \gamma$-syntactic partitions.

\begin{definition}
Let $L$ be a stochastic language.
\begin{itemize}
    \item A partition $P$ is considered as \emph{ $\gamma$-contextual} if it minimizes 
\begin{equation} \label{eq:1}
I_{L}(P)(1-\gamma) - I_{\overline{L}}(P)
\end{equation}
\item  A partition $P$ is considered \emph{$\mu, \gamma$-syntactic} if it minimizes 
\begin{equation} \label{eq:2}
\max_{P^*} I_{L}(P; P_{*}) - \mu~I_{L}(P)
\end{equation}
for any $\gamma$-contextual partition $P^*$. 
\end{itemize}
\end{definition}

Let $P$ and $P'$ be two partitions for $L$, such that $$\Delta_I(L) = I_{P'}(L) - I_P(L) \geq 0$$ then the $\gamma$-contextual program (\ref{eq:1}) would choose $P'$ over $P$ iff $$\frac{\Delta_I(L)-\Delta_I(\overline{L})}{\Delta_I(L)} \leq \gamma$$

Let $P^*$ be a $\gamma$-contextual partition. Let $$\Delta_{MI}(L,P^*) =  I_L({P'; P^*})- I_L({P; P^*})$$
then the $\mu, \gamma$-syntactic program (\ref{eq:2}) would choose $P'$ over $P$ iff $$\frac{\Delta_{MI}(L,P^*)}{\Delta_I(L)} \leq \mu$$
\begin{example}
Let us consider the following partitions:
\noindent - $P_1$ and $P_2$ refer to the previous partitions above: \{Animals, Science, Medicine\} and \{Noun, Verb\}

\begin{figure}[!b]
\centering
\includegraphics[scale=0.5]{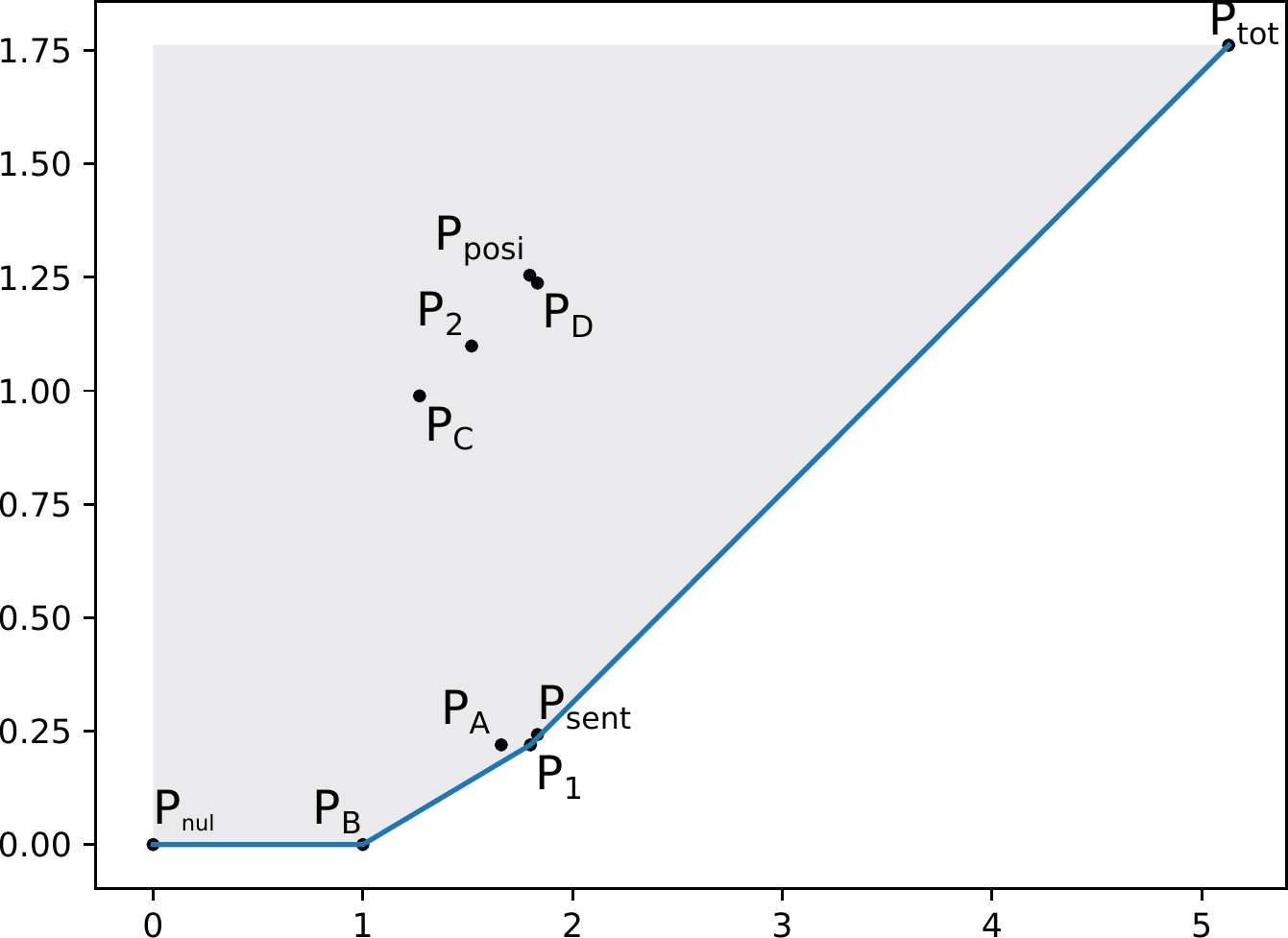}
\caption{$I_{L}(P)-I_{\overline{L}}(P)$ represented w.r.t. $I_{L}(P)$ for different partitions: acceptable solutions of program~(\ref{eq:1}) lie on the convex hull boundary of the set of all partitions. Solution for $\gamma$ is given by the tangent of slope $\gamma$. Non trivial solutions are $P_B$ and $P_1$.}
\label{fig_deltaI}
\end{figure}

\noindent - $P_A$ is adapted from $P_1$ so that \textit{'fear'} belongs to \textit{Animals} and \textit{Medicine}

\begin{center}
$\{c,e,r,f(\frac{1}{2})\}, \{m,p,t\}, \{d,h,w,f(\frac{1}{2})\}$
\end{center}

\noindent - $P_B$ merges \textit{Animals} and \textit{Medicine} from $P_1$

\begin{center}
$\{c,e,r,f,d,h,w\}, \{m,p,t\}$
\end{center}

\noindent - $P_{sent}$ describes the probability for a word to belong to a given sentence (5 categories)

\noindent - $P_C$ is adapted from $P_2$ so that \textit{'fear'} belongs to \textit{Verb} and \textit{Noun}

\begin{center}
$\{c,r,m,t,d,w,f(\frac{1}{2})\}, \{e,p,h,f(\frac{1}{2})\}$
\end{center}


\noindent - $P_D$ is adapted from $P_2$ and creates a special category for \textit{'fear'}

\begin{center}
$\{c,r,m,t,d,w\}, \{e,p,h\}, \{f\}$
\end{center}

\noindent - $P_{posi}$ describes the probability for a word to appear in a given position (3 categories)

\medskip

\begin{figure}[!t]
\centering
\includegraphics[scale=0.5]{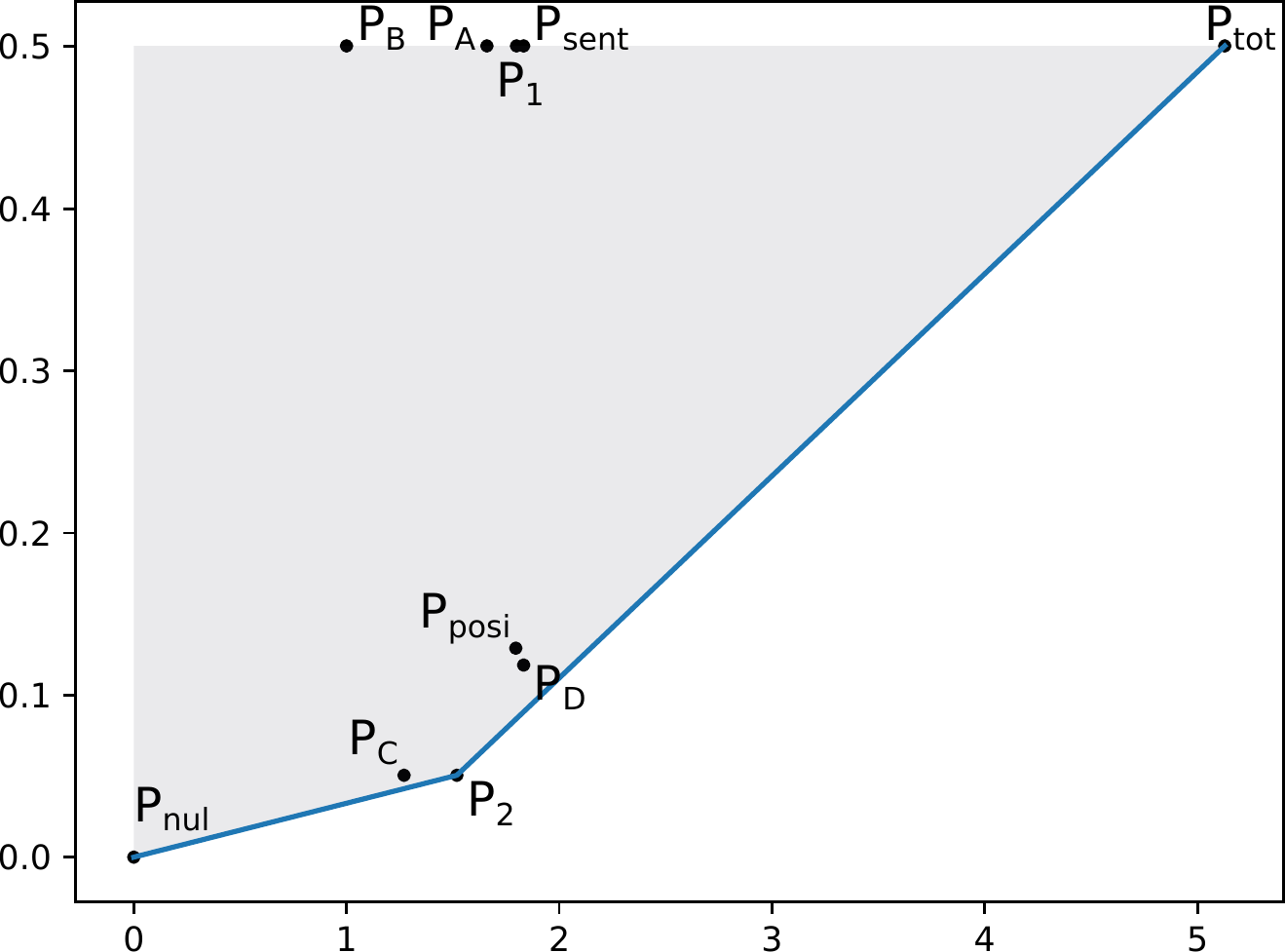}
\captionof{figure}{$I_{L}(P;P_B)$ represented w.r.t. $I_{L}(P)$ for different partitions: acceptable solutions of program~(\ref{eq:2}) lies on the convex hull boundary of the set of all partitions. Solution for $\mu$ is given by the tangent of slope $\mu$. Non-trivial solution is $P_2$.}
\label{fig_MI}
\end{figure}
Acceptable solutions of (\ref{eq:1}) and (\ref{eq:2}) are, respectively, on the convex hull boundary in Fig.\ref{fig_deltaI} and Fig.\ref{fig_MI}. While the lowest parameter (non trivial) solutions are $P_B$ for context and $P_2$ for syntax, one can check that partitions $P_1$, $P_A$ and $P_{sent}$ are all close to the boundary in Fig.\ref{fig_deltaI}, and that partitions $P_C$, $P_D$ and $P_{posi}$ are all close to the boundary in Fig.\ref{fig_MI}, as expected considering their information content. 
\end{example}

\subsection{Experiments}
In this section we illustrate the emergence of syntactic information via the application of objectives (\ref{eq:1}) and (\ref{eq:2}) to a natural language corpus. We show that the information we acquire indeed translates into known syntactic and contextual categories.


For this experiment we created a corpus from the Simple English Wikipedia dataset \cite{kauchak13}, selected along three main topics: \emph{Numbers}, \emph{Democracy}, and \emph{Hurricane}, with about $430$ sentences for each topic and a vocabulary of $2963$ unique words. The stochastic language is the set $L^3$ of $3$-gram frequencies from the dataset. In order to avoid biases with respect to the final punctuation, we considered overlapping $3$-grams over sentences.
For the sake of evaluation, we construct one contextual and one syntactic embedding for each word. These are the probabilistic partitions over gold standard contextual and syntactic categories. The contextual embedding $P_{con}$ is defined by relative frequency in the three topics. The results for this partition are $I_{L^3}(P_{con})=0.06111$ and $I_{\overline{L^3}}(P_{con})=0.06108$, corresponding to a $\gamma$ threshold of $6.22.10^{-4}$ in (\ref{eq:1}), and thus distribution over topics can be considered as an almost purely contextual partition. The syntactic partition $P_{syn}$ is the distribution over POS categories (tagged with the Stanford tagger, \citet{stanfordtagger}).

Using the gold categories, we can manipulate the information in the partitions by merging and splitting across contextual or syntactic categories. We study how the information calculated by (\ref{eq:1}) and (\ref{eq:2}) evolve; we  validate our claims if we can deduce the nature of information from these statistics.

\begin{figure}[!ht]
\centering
\includegraphics[scale=0.5]{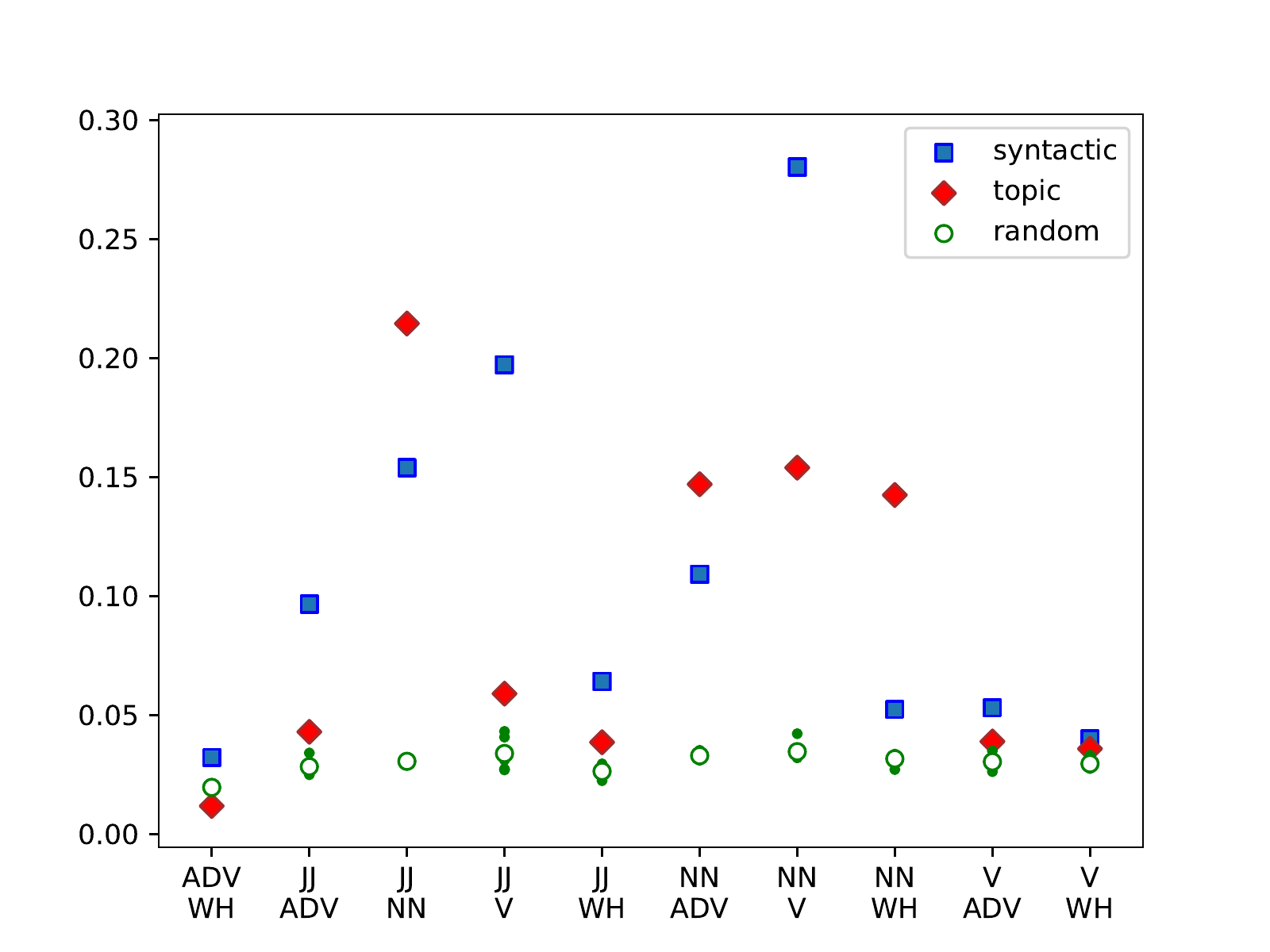}
\captionof{figure}{Increase of information $\Delta_I$ in three scenarios: syntactic split, topic split and random split.}
\label{fig_info}
\end{figure}

We start from the syntactic embeddings and we split and merge over the following POS categories: Nouns (\emph{NN}), Adjectives (\emph{JJ}), Verbs (\emph{V}), Adverbs(\emph{ADV}) and Wh-words (\emph{WH}). For a pair of categories (say \emph{NN+V}), we create:
\begin{itemize}
\item $P_{merge}$ merges the two categories ($NN+ V$)
\item $P_{syntax}$ splits the merged category into $NN$ and $V$ (syntactic split)
\item $P_{topic}$ splits the merged category into $(NN+ V)_{t_1}$, $(NN+ V)_{t_2}$ and $(NN+ V)_{t_3}$ along the three topics (topic split)
\item $P_{random}$ which splits the merged category into $(NN+ V)_1$ and $(NN+ V)_2$ randomly (random split)
\end{itemize}
It is clear that each split will increase the information compared to $P_{merge}$. We display the simple information gains $\Delta_I$ in Fig.\ref{fig_info}. The question is whether we can identify if the added information is syntactic or contextual in nature, i.e. if we can find a $\mu$ for which the $\mu, \gamma$-syntactic program (\ref{eq:2}) selects every syntactic splitting and rejects every contextual or random one.

\begin{figure}[!ht]
\centering
\includegraphics[scale=0.5]{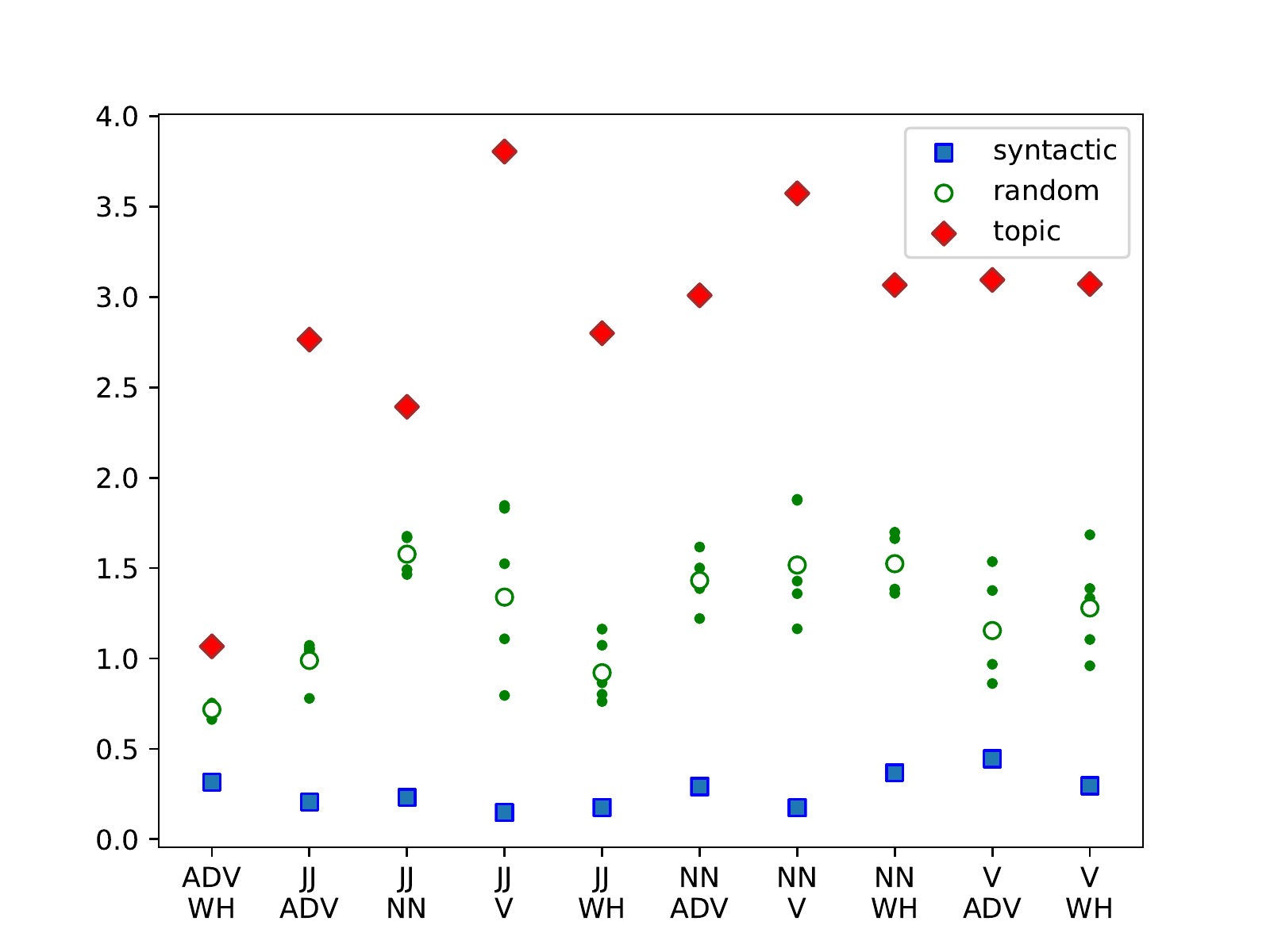}
\captionof{figure}{Ratio $\Delta_{MI}/ \Delta_I$ in three scenarios: syntactic split, topic split and random split. Considering objective (\ref{eq:2}) with parameter $\mu = 0.5$ leads to discrimination between contextual and syntactic information.}
\label{fig_mu}
\end{figure}

Fig.\ref{fig_mu} represents  the ratio between the increase of mutual information (relatively to  $P_{con}$) $\Delta_{MI}$ and the increase of information $ \Delta_I$, corresponding to the the threshold $\mu$ in (\ref{eq:2}). It shows that indeed for a $\mu = 0.5$ syntactic information (meaningful refinement according to POS) will be systematically selected, while random or topic splittings will not. We conclude that even for a small natural language sample, syntactic categories can be identified based on statistical considerations, where a language model learning algorithm would need further information or hypotheses.


\subsection{Integration with Models}

We have shown that our framework allows to search for syntactic categories without prior hypothesis of a particular model. Yet if we do have a hypothesis, we can indeed search for the syntactic categories that fit the particular class of models $\mathcal{M}$. In order to find the categories which correspond to the syntax rules that can be formulated in a given class of models, we can integrate the model class in the training objective by replacing entropy by the negative log-likelihood of the training sample.

Let $M \in \mathcal{M}$ be a model, which takes a probabilistic partition $P = (V,C, \pi)$ as input, and let $LL(M, P, L_S)$ be the log-likelihood obtained for sample $S$. We will denote 

$$\Tilde{H}(L_S, P) = - \sup_{M \in \mathcal{M}} LL(M, P, L_S)$$
$$\Tilde{I}_{L_S}(P) = \Tilde{H}(\overline{\overline{L_S}}, P) - \Tilde{H}(L_S, P)$$
Following Definition~\ref{def:mutual-info}, we define
$$\Tilde{I}_{L_S}(P;P') = \Tilde{H}(L_S, P)+ \Tilde{H}(L_S, P') - \Tilde{H}(L_S, P\cdot P')$$

We may consider the following program:

\begin{itemize}
\item A partition $P$ is said to be \emph{$\gamma$-contextual} if it minimizes $$\Tilde{I}_{L_S}(P) (1- \gamma) - \Tilde{I}_{\overline{L_S}}(P)$$
\item Let $P_{*}$ be a $\gamma$-contextual partition for $L$, $\mu \in \mathbb{R^+}$, $k\in \mathbb{N}$. The partition $P$ is considered $\mu, \gamma$-syntactic if it minimizes $$\max_{P^*}\Tilde{I}_{L_S}(P;P^*) - \mu~\Tilde{I}_{L_S}(P)$$
\end{itemize}


\section{Conclusion and Future Work}\label{concl}

In this paper, we proposed a theoretical reformulation for the problem of learning syntactic information from a corpus. Current language models have difficulty acquiring syntactically relevant generalizations for diverse reasons. On the one hand, we observe a natural tendency to lean towards shallow contextual generalizations, likely due to the maximum likelihood training objective. On the other hand, a corpus is not representative of human linguistic competence but of performance. It is however possible for linguistic competence - syntax - to emerge from data if we prompt models to establish a distinction between syntactic and contextual (semantic/pragmatic) information. 



Two orientations can be identified for future work. The immediate one is experimentation. The current formulation of our syntax learning scheme needs adjustments in order to be applicable to real natural language corpora. 
At present, we are working on an incremental construction of the space of categories.

The second direction is towards extending the approach to morphologically rich languages. In that case, two types of surface realization need to be considered: word order and morphological markers. An agglutinating morphology probably allows a more straightforward application of the method, by treating affixes as individual elements of the vocabulary. The adaptation to other types of morphological markers will necessitate more elaborate linguistic reflection.
 




\bibliography{acl2020}

\begin{thebibliography}{47}
\expandafter\ifx\csname natexlab\endcsname\relax\def\natexlab#1{#1}\fi

\bibitem[{Adhiguna et~al.(2018)Adhiguna, Chris, John, Dani, Stephen, and
  Phil}]{kucoro18}
Kuncoro Adhiguna, Dyer Chris, Hale John, Yogatama Dani, Clark Stephen, and
  Blunsom Phil. 2018.
\newblock {L}{S}{T}{M}s can learn syntax-sensitive dependencies well, but
  modeling structure makes them better.
\newblock In \emph{Proceedings of the 56th Annual Meeting of the Association
  for Computational Linguistics}, pages 1426--–1436.

\bibitem[{Baroni(2019)}]{baroni19}
Marco Baroni. 2019.
\newblock Linguistic generalization and compositionality in modern artificial
  neural networks.
\newblock \emph{CoRR}, abs/1904.00157.

\bibitem[{Belinkov et~al.(2017)Belinkov, Durrani, Dalvi, Sajjad, and
  Glass}]{belinkov17}
Yonatan Belinkov, Nadir Durrani, Fahim Dalvi, Hassan Sajjad, and James Glass.
  2017.
\newblock What do neural machine translation models learn about morphology?
\newblock In \emph{Proceedings of the 55th Annual Meeting of the Association
  for Computational Linguistics}, pages 861--872.

\bibitem[{Berent and Marcus(2019)}]{replypater}
Iris Berent and Gary Marcus. 2019.
\newblock No integration without structured representations: Response to
  {P}ater.
\newblock \emph{Language}, 95:1:e75--e86.

\bibitem[{Bernardy and Lappin(2017)}]{bernardy17}
Jean-Philippe Bernardy and Shalom Lappin. 2017.
\newblock Using deep neural networks on learn syntactic agreement.
\newblock \emph{Linguistic Issues in Language Technology}, 15(2):1--–15.

\bibitem[{Blevins et~al.(2018)Blevins, Levy, and Zettlemoyer}]{blevins18}
Terra Blevins, Omer Levy, and Luke Zettlemoyer. 2018.
\newblock Deep {RNN}s encode soft hierarchical syntax.
\newblock In \emph{Proceedings of the 56th Annual Meeting of the Association
  for Computational Linguistics}, pages 14--–19.

\bibitem[{Bowman et~al.(2015)Bowman, Manning, and Potts}]{bowman15}
Samuel~R. Bowman, Christopher~D. Manning, and Christopher Potts. 2015.
\newblock Tree-structured composition in neural networks without
  tree-structured architectures.
\newblock In \emph{NIPS Workshop on Cognitive Computation: Integrating Neural
  and Symbolic Approaches}.

\bibitem[{Chomsky(1957)}]{chomsky57}
Noam Chomsky. 1957.
\newblock \emph{Syntactic Structures}.
\newblock Mouton, Berlin, Germany.

\bibitem[{Chomsky(1980)}]{chomsky80}
Noam Chomsky. 1980.
\newblock Rules and representations.
\newblock \emph{Behavioral and Brain Sciences}, 3(1):1--15.

\bibitem[{Chowdhury and Zamparelli(2018)}]{chowdhury18}
Shammur~Absar Chowdhury and Roberto Zamparelli. 2018.
\newblock {RNN} simulations of grammaticality judgments on long-distance
  dependencies.
\newblock In \emph{Proceedings of the 27th International Conference on
  Computational Linguistics}, pages 133--144.

\bibitem[{Clark and Eyraud(2006)}]{clark06}
Alexander Clark and R\'{e}mi Eyraud. 2006.
\newblock Learning auxiliary fronting with grammatical inference.
\newblock In \emph{Conference on Computational Language Learning}, pages
  125--132.

\bibitem[{Clark and Lappin(2010)}]{clark10}
Alexander Clark and Shalom Lappin. 2010.
\newblock Unsupervised learning and grammar induction.
\newblock In \emph{Handbook of Computational Linguistics and Natural Language
  Processing}. Wiley-Blackwell, Oxford.

\bibitem[{Conneau et~al.(2018)Conneau, Kruszewski, Lample, Barrault, and
  Baroni}]{conneau-etal-2018-cram}
Alexis Conneau, German Kruszewski, Guillaume Lample, Lo{\"\i}c Barrault, and
  Marco Baroni. 2018.
\newblock What you can cram into a single {\$}{\&}!{\#}* vector: {P}robing
  sentence embeddings for linguistic properties.
\newblock In \emph{Proceedings of the 56th Annual Meeting of the Association
  for Computational Linguistics}, pages 2126--2136.

\bibitem[{Dyer et~al.(2016)Dyer, Kuncoro, Ballesteros, and Smith}]{dyer16}
Chris Dyer, Adhiguna Kuncoro, Miguel Ballesteros, and Noah~A. Smith. 2016.
\newblock Recurrent neural network grammars.
\newblock In \emph{North American Chapter of the Association for Computational
  Linguistics: Human Language Technologies}.

\bibitem[{Frege(1892)}]{frege}
Gottlob Frege. 1892.
\newblock \"{U}ber {S}inn und {B}edeitung.
\newblock \emph{Zeitschrift f\"{u}r Philosophie und philosophische Kritik},
  100:25--50.

\bibitem[{Giulianelli et~al.(2018)Giulianelli, Harding, Mohnert, Hupkes, and
  Zuidema}]{giulianelli18}
Mario Giulianelli, Jack Harding, Florian Mohnert, Dieuwke Hupkes, and Willem
  Zuidema. 2018.
\newblock Under the hood: Using diagnostic classifiers to investigate and
  improve how language models track agreement information.
\newblock In \emph{{EMNLP} Workshop {B}lackbox {N}{L}{P}: Analyzing and
  Interpreting Neural Networks for {NLP}}, pages 240--248.

\bibitem[{Gold(1967)}]{gold67}
E.~Mark Gold. 1967.
\newblock Language identification in the limit.
\newblock \emph{Information and control}, 10:5:447--474.

\bibitem[{Gulordava et~al.(2018)Gulordava, Bojanowski, Grave, Linzen, and
  Baroni}]{gulordava18}
Kristina Gulordava, Piotr Bojanowski, Edouard Grave, Tal Linzen, and Marco
  Baroni. 2018.
\newblock Colorless green recurrent networks dream hierarchically.
\newblock In \emph{North American Chapter of the Association for Computational
  Linguistics: Human Language Technologies}, pages 1195--1205.

\bibitem[{Hewitt and Liang(2019)}]{hewitt19c}
John Hewitt and Percy Liang. 2019.
\newblock Designing and interpreting probes with control tasks.
\newblock In \emph{Proceedings of the 2019 Conference on Empirical Methods in
  Natural Language Processing}.

\bibitem[{Hewitt and Manning(2019)}]{hewitt19a}
John Hewitt and Christopher~D. Manning. 2019.
\newblock A structural probe for finding syntax in word representations.
\newblock In \emph{Proceedings of the North American Chapter of the Association
  for Computational Linguistics}.

\bibitem[{Hupkes et~al.(2018)Hupkes, Veldhoen, and Zuidema}]{hupkes18}
Dieuwke Hupkes, Sara Veldhoen, and Willem~H. Zuidema. 2018.
\newblock Visualisation and 'diagnostic classifiers' reveal how recurrent and
  recursive neural networks process hierarchical structure.
\newblock \emph{Journal of Artificial Intelligence Research}, 61:907–--926.

\bibitem[{Kauchak(2013)}]{kauchak13}
David Kauchak. 2013.
\newblock Improving text simplification language modeling using unsimplified
  text data.
\newblock In \emph{Proceedings of the 51st Annual Meeting of the Association
  for Computational Linguistics}, pages 1537--1546.

\bibitem[{Lake and Baroni(2017)}]{lake17}
Brenden~M. Lake and Marco Baroni. 2017.
\newblock Generalization without systematicity: On the compositional skills of
  sequence-to-sequence recurrent networks.
\newblock In \emph{34th International Conference on Machine Learning}.

\bibitem[{Lappin and Shieber(2007)}]{lappin07}
Shalom Lappin and Stuart Shieber. 2007.
\newblock Machine learning theory and practice as a source of insight into
  universal grammar.
\newblock \emph{Journal of Linguistics}, 43:393--427.

\bibitem[{Levin and Rappaport~Hovav(2005)}]{levin05}
Beth Levin and Malka Rappaport~Hovav. 2005.
\newblock \emph{Argument Realization}.
\newblock Cambridge University Press, Cambridge.

\bibitem[{Levy et~al.(2015)Levy, Remus, Biemann, and Dagan}]{levy15}
Omer Levy, Steffen Remus, Chris Biemann, and Ido Dagan. 2015.
\newblock Do supervised distributional methods really learn lexical inference
  relations?
\newblock In \emph{Proceedings of the North American Chapter of the Association
  for Computational Linguistics – Human Language Technologies}, pages
  970--976, Denver, Colorado. Association for Computational Linguistics.

\bibitem[{Li and Eisner(2019)}]{li19}
Xiang~Lisa Li and Jason Eisner. 2019.
\newblock Specializing word embeddings (for parsing) by information bottleneck.
\newblock In \emph{2019 Conference on Empirical Methods in Natural Language
  Processing and International Joint Conference on Natural Language
  Processing}, pages 2744--2754, Hong Kong, China.

\bibitem[{Linzen(2019)}]{linzen19}
Tal Linzen. 2019.
\newblock What can linguistics and deep learning contribute to each other?
  {R}esponse to {P}ater.
\newblock \emph{Language}, 95(1):e98--e108.

\bibitem[{Linzen et~al.(2016)Linzen, Dupoux, and Goldberg}]{linzen16}
Tal Linzen, Emmanuel Dupoux, and Yoav Goldberg. 2016.
\newblock Assessing the ability of {LSTMs} to learn syntax-sensitive
  dependencies.
\newblock \emph{Transactions of the Association for Computational Linguistics},
  4:521--535.

\bibitem[{Marvin and Linzen(2018)}]{marvin18}
Rebecca Marvin and Tal Linzen. 2018.
\newblock Targeted syntactic evaluation of language models.
\newblock In \emph{Proceedings of the 2018 Conference on Empirical Methods in
  Natural Language Processing}, Brussels, Belgium. Association for
  Computational Linguistics.

\bibitem[{McCoy et~al.(2018)McCoy, Frank, and Linzen}]{mccoy18}
Richard McCoy, Robert Frank, and Tal Linzen. 2018.
\newblock Revisiting the poverty of the stimulus: hierarchical generalization
  without a hierarchical bias in recurrent neural networks.
\newblock \emph{ArXiv}, abs/1802.09091.

\bibitem[{McCoy et~al.(2019)McCoy, Pavlick, and Linzen}]{mccoy19}
Tom McCoy, Ellie Pavlick, and Tal Linzen. 2019.
\newblock Right for the wrong reasons: Diagnosing syntactic heuristics in
  natural language inference.
\newblock In \emph{Proceedings of the 57th Annual Meeting of the Association
  for Computational Linguistics}, pages 3428--3448.

\bibitem[{Mikolov et~al.(2010)Mikolov, Karafiat, Burget, Cernocky, and
  Khudanpur}]{mikolov10}
Tomas Mikolov, Martin Karafiat, Lukas Burget, Jan Cernocky, and Sanjeev
  Khudanpur. 2010.
\newblock Recurrent neural network based language model.
\newblock In \emph{INTERSPEECH}.

\bibitem[{Newmeyer(2003)}]{newmeyer}
Frederick~J. Newmeyer. 2003.
\newblock Grammar is grammar and usage is usage.
\newblock \emph{Language}, 79:4:682–--707.

\bibitem[{Niven and Kao(2019)}]{niven19}
Timothy Niven and Hung-Yu Kao. 2019.
\newblock Probing neural network comprehension of natural language arguments.
\newblock In \emph{Proceedings of the 57th Annual Meeting of the Association
  for Computa-tional Linguistics}, pages 4658--4664.

\bibitem[{Pater(2019)}]{pater}
Joe Pater. 2019.
\newblock Generative linguistics and neural networks at 60: Foundation,
  friction, and fusion.
\newblock \emph{Language}, 95:1:41--74.

\bibitem[{Peters et~al.(2018{\natexlab{a}})Peters, Neumann, Iyyer, Gardner,
  Clark, Lee, and Zettlemoyer}]{peters18}
Matthew~E. Peters, Mark Neumann, Mohit Iyyer, Matt Gardner, Christopher Clark,
  Kenton Lee, and Luke Zettlemoyer. 2018{\natexlab{a}}.
\newblock Deep contextualized word representations.
\newblock In \emph{North American Chapter of the Association for Computational
  Linguistics: Human Language Technologies}.

\bibitem[{Peters et~al.(2018{\natexlab{b}})Peters, Neumann, Zettlemoyer, and
  Yih}]{peters18b}
Matthew~E. Peters, Mark Neumann, Luke Zettlemoyer, and Wentau Yih.
  2018{\natexlab{b}}.
\newblock Dissecting contextual word embeddings: Architecture and
  representation.
\newblock In \emph{Proceedings of the 2018 Conference on Empirical Methods in
  Natural Language Processing}, pages 1499--1509.

\bibitem[{Ravfogel et~al.(2019)Ravfogel, Goldberg, and Linzen}]{ravfogel19}
Shauli Ravfogel, Yoav Goldberg, and Tal Linzen. 2019.
\newblock Studying the inductive biases of {RNN}s with synthetic variations of
  natural languages.
\newblock \emph{CoRR}, abs/1903.06400.

\bibitem[{Ravfogel et~al.(2018)Ravfogel, Goldberg, and Tyers}]{ravfogel18}
Shauli Ravfogel, Yoav Goldberg, and Francis Tyers. 2018.
\newblock Can {LSTM} learn to capture agreement? the case of basque.
\newblock In \emph{{EMNLP} Workshop {B}lackbox {N}{L}{P}: Analyzing and
  Interpreting Neural Networks for {NLP}}, pages 98--107, Brussels, Belgium.

\bibitem[{Saphra and Lopez(2018)}]{saphra18}
Naomi Saphra and Adam Lopez. 2018.
\newblock Language models learn {POS} first.
\newblock In \emph{{EMNLP} Workshop {B}lackbox {N}{L}{P}: Analyzing and
  Interpreting Neural Networks for {NLP}}, pages 328--330, Brussels, Belgium.
  Association for Computational Linguistics.

\bibitem[{Saxton et~al.(2019)Saxton, Grefenstette, Hill, and Kohli}]{saxton19}
David Saxton, Edward Grefenstette, Felix Hill, and Pushmeet Kohli. 2019.
\newblock Analysing mathematical reasoning abilities of neural models.
\newblock In \emph{Proceedings of the 7th International Conference on Learning
  Representations}.

\bibitem[{van Schijndel et~al.(2019)van Schijndel, Mueller, and
  Linzen}]{vanschijndel19}
Marten van Schijndel, Aaron Mueller, and Tal Linzen. 2019.
\newblock Quantity doesn{'}t buy quality syntax with neural language models.
\newblock In \emph{Proceedings of Empirical Methods in Natural Language
  Processing and International Joint Conference on Natural Language
  Processing}, pages 5830--5836. Association for Computational Linguistics.

\bibitem[{Shannon(1948)}]{shannon}
Claude~E. Shannon. 1948.
\newblock A mathematical theory of communication.
\newblock \emph{Bell System Technical Journal}, 27:379--423 and 623--656.

\bibitem[{Tenney et~al.(2019)Tenney, Xia, Chen, Wang, Poliak, McCoy, Kim,
  Durme, Bowman, Das, and Pavlick}]{tenney19}
Ian Tenney, Patrick Xia, Berlin Chen, Alex Wang, Adam Poliak, R~Thomas McCoy,
  Najoung Kim, Benjamin~Van Durme, Sam Bowman, Dipanjan Das, and Ellie Pavlick.
  2019.
\newblock What do you learn from context? {P}robing for sentence structure in
  contextualized word representations.
\newblock In \emph{International Conference on Learning Representations}.

\bibitem[{Tishby et~al.(1999)Tishby, Pereira, and Bialek}]{tishby00}
Naftali Tishby, Fernando Pereira, and William Bialek. 1999.
\newblock The information bottleneck method.
\newblock In \emph{Annual Allerton Conference on Communication, Control and
  Computing}, pages 368--377.

\bibitem[{Toutanova et~al.(2003)Toutanova, Klein, Manning, and
  Singer}]{stanfordtagger}
Kristina Toutanova, Dan Klein, Christopher~D. Manning, and Yoram Singer. 2003.
\newblock Feature-rich part-of-speech tagging with a cyclic dependency network.
\newblock In \emph{Proceedings of the North American Chapter of the Association
  for Computational Linguistics}, page 173–180.

\end{thebibliography}
\bibliographystyle{acl_natbib}

\end{document}